\newcommand{\removelatexerror}{\let\@latex@error\@gobble}
\newcommand{\R}{\mathbb{R}}
\newcommand{\cA}{\mathcal{A}}
\newcommand{\cB}{\mathcal{B}}
\newcommand{\cN}{\mathcal{N}}
\newcommand{\cO}{\mathcal{O}}
\newcommand{\cS}{\mathcal{S}}
\newcommand{\cU}{\mathcal{U}}
\newcommand{\cV}{\mathcal{V}}
\newtheorem{theorem}{Theorem}[section]
\newtheorem{lemma}[theorem]{Lemma}
\newcommand{\argmax}{\mathop{\mathrm{argmax}}}
\renewcommand{\vec}[1]{\ensuremath{\bm{#1}}}
\newcommand{\mat}[1]{\ensuremath{\mathbf{#1}}}
\newcommand{\ten}[1]{\mat{\ensuremath{\boldsymbol{\mathcal{#1}}}}}
\DeclareMathOperator*{\Prob}{\mathbb{P}}
\newcommand{\longmethod}{unnormalized Q function}
\newcommand{\longmethodcaps}{Unnormalized Q function}
\newcommand{\method}{UQF}
\begin{document}

\twocolumn[

\aistatstitle{Efficient Planning under Partial Observability with\\ Unnormalized Q Functions and Spectral Learning}
\vspace{-0.5\baselineskip}
\aistatsauthor{ Tianyu Li\footnotemark[1]\footnotemark[2]\footnotemark[3] \And Bogdan Mazoure\footnotemark[1]\footnotemark[2]\footnotemark[3]\And Doina Precup\footnotemark[2]\footnotemark[3]\footnotemark[5]\footnotemark[6]\And  Guillaume Rabusseau\footnotemark[3]\footnotemark[4]\footnotemark[5]}\vspace{0.5\baselineskip}]

\begin{abstract}
Learning and planning in partially-observable domains is one of the most difficult problems in reinforcement learning. Traditional methods consider these two problems as independent, resulting in a classical two-stage paradigm: first learn the environment dynamics and then plan accordingly. This approach, however, disconnects the two problems and can consequently lead to algorithms that are sample inefficient and time consuming. In this paper, we propose a novel algorithm that combines learning and planning together. Our algorithm is closely related to the spectral learning algorithm for predicitive state representations and offers appealing theoretical guarantees and time complexity. We empirically show on two domains that our approach is more sample  and time efficient compared to  classical methods.
 
\end{abstract}

 \section{Introduction}

A common assumption in reinforcement learning (RL) is that the agent has the knowledge of the entire dynamics of the environment, including the state space, transition probabilities, and a reward model. However, in many real world applications, this assumption may not always be valid. Instead, the environment is often \emph{partially observable}, meaning that the true state of the system is not completely visible to the agent. This partial observability can result in numerous difficulties in terms of learning the dynamics of the environment and planning to maximize returns. 

Partially observable Markov decision Processes
(POMDPs)~\cite{sondik1978optimal,cassandra1994acting} provide a formal framework for single-agent planning under a partially observable environment.  In contrast with MDPs, agents in POMDPs do not have direct access to the state space. Instead of observing the states, agents only have access to observations and need to operate on the so-called \emph{belief states}, which describe the distribution over the state space given some past trajectory. Therefore, POMDPs model the dynamics of an RL environment in a latent variable fashion and explicitly reason about uncertainty in both action effects
and state observability~\cite{boots2011closing}. Planning under a POMDP has long been considered a difficult problem~\cite{kaelbling1998planning}. To perform \emph{exact} planning under a POMDP, one common approach is to optimize the value function
over all possible belief states. Value iteration for POMDPs~\cite{sondik1978optimal} is one particular example of this approach. However, due to the curse of dimensionality and curse of history~\cite{pineau2006anytime}, this   method is often computationally intractable for most realistic POMDP planning problems~\cite{boots2011closing}. 

As an alternative to exact planning, the family of predictive state representations~(PSRs) has attracted many interests. In fact, PSRs are no weaker than POMDPs in terms of their representation power~\cite{littman2002predictive}, and there are many efficient algorithms to estimate PSRs and their variants relying on likelihood based algorithms~\cite{singh2003learning,singh2004predictive} or spectral learning techniques~\cite{boots2011closing, hamilton2013modelling}. However, to plan with PSRs is not straight-forward. Typically, a two-stage process is applied to discover the optimal policy with PSRs: first, a PSR model is learned in an unsupervised fashion, then a planning method is used to discover the optimal policy based on the learned dynamics. Several planning algorithms can be used for the second stage of this process. For example in~\cite{boots2011closing,izadi2008point}, a reward function is estimated with the learned PSRs, and then combined with point based value iteration (PBVI)~\cite{pineau2003point} to obtain an approximation of the optimal policy; in~\cite{hamilton2014efficient}, the authors use the fitted-Q method~\cite{ernst2005tree} to iteratively regress Bellman updates on the learned state representations, thus approximating the action value function.  
\footnotetext{\footnotemark[1] Equal Contribution \footnotemark[3] Quebec AI Institute (MILA) \footnotemark[2] McGill University \footnotemark[4] Université de Montréal \footnotemark[5] Canadian Institute for Advanced Research (CIFAR) Chair \footnotemark[6]Deepmind Montreal}

However, despite numerous successes, this two-stage process still suffers from  significant drawbacks. To begin with, the PSRs parameters are learned independently from the reward information, resulting in a less efficient representation for planning. Secondly, planning with PSRs often involves multiple stages of regression, and these extra steps of approximation can be detrimental for obtaining the optimal policy. Last but not least, the planning methods for PSRs are often iterative methods that can be very time consuming. 

In this work, we propose an alternative to the traditional paradigm of planning in partially observable environments. Inspired by PSRs, our method leverages the spectral learning algorithm for subspace identification, treating the environment as a latent variable model. However, instead of explicitly learning the dynamics of the environment, we learn a function that is proportional to the action value function, which we call \emph{unnormalized Q function} (UQF). In doing so, we incorporate the reward information into the dynamics in a supervised learning fashion., which unifies the two stages of the classical learning-planning paradigm for POMDPs. In some sense, our approach effectively learns a goal-oriented representation of the environment. Therefore, in terms of planning, our method is more sample efficient compared to the two-stage learning paradigm (for example, PSRs).
Our algorithm relies on the spectral learning algorithm for \emph{weighted finite automata}~(WFAs), which are an extension of PSRs that can model not only probability distributions, but arbitrary real-valued functions. Our method inherits  the benefits of spectral learning: it provides a consistent estimation of the UQF and is computationally more efficient than EM based methods. 
Furthermore, planning with PSR usually requires multiple steps and often uses iterative planning method, which can be time consuming. In contrast, our algorithm directly learns a policy in one step, offering a more time efficient method. In addition, we also adopt \emph{matrix compressed sensing} techniques to extend this approach to complex domains. This technique has also been used in PSRs based methods to overcome similar problems~\cite{hamilton2014efficient}.

We conduct experiments on partially observable grid world and S-PocMan environment~\cite{hamilton2014efficient} where we compare our approach with classical PSR based methods. In both domains, our approach is significantly more data-efficient than PSR based methods with considerably smaller running time.

\section{Background}
 In this section, we will introduce some basic RL concepts, including partially observable Markov decision processes (POMDPs), predictive state representations (PSRs) and their variants as well as the notion of WFAs. We will also introduce the spectral learning algorithms for WFAs. 
 
\subsection{Partially Observable Markov Decision Processes (POMDPs)}
Markov decision processes have been widely applied in the field of reinforcement learning. A Markov decision process (MDP) of size $k$ is characterized by a 6-tuple $\langle \ten{T}, \mat{r}, \cA, \cS,\vec{\mu},\gamma\rangle$ where $\ten{T} \in [0, 1]^{\cS\times \cA \times \cS}$  is the transition probability; $\bm{\mu} \in [0,1]^{\cS}$ is the initial state distribution; $\mat{r}\in \mathbb{R}^{\cS}$ is the reward vector over states; $\gamma\in [0,1)$ is the discount factor; $\cS$ is the set of states and $\cS = \{s^1, \cdots s^k\}$ and $\cA$ is the set of actions.
The goal of an RL task is often to learn a policy that governs the actions of the agent to maximize the \emph{accumulated discounted rewards} (return) in the future. A policy in an MDP environment is defined as $\mat{\Pi}\in [0,1]^{\cS\times \cA}$. $\mat{\Pi}$ operates at the state level. At each timestep, the optimal action is selected probabilistically with respect to $\mat{\Pi}$ given the state of the current step. The agent then move to the next state depending on the corresponding transition matrix indexed by $a$ and collect potential rewards from the state. 

 However, in practice, it is rarely the case that we can observe the exact state of the agent. For example, in the game of poker, the player only knows the cards at hands and this information alone does not determine the exact state of the game. Partially observable Markov decision processes (POMDPs) were introduced to model this type of problems. Under the POMDP setting, the true state space of the model is hidden from the agent through partial observability. That is, an observation $o_t$ is obtained probabilistically based on the agent's current state and the observation emission probability. A \textit{partially observable Markov decision process} (POMDP) is characterized by an 8-tuple $\langle \ten{T},\ten{O}, \mat{r}, \cA,\cO, S,\vec{\mu},\gamma\rangle$, where, $\cO$ is a set of observations and $\ten{O}\in \mathbb{R}^{S\times A\times O}$ is the observation emission probability  and the rest parameters follow the definitions in MDPs. 
 
 As the agent cannot directly observe which state it is at, one classic problem in POMDP is to compute the \emph{belief state} $\bm{b}(h)\in\mathbb{R}^\cS$ knowing the past trajectory $h$. Formally, given $h = a_1o_1\cdots a_no_n \in (\cA\times \cO)^*$, we want to compute $\bm{b}(h)^\top = [\mathbb{P}(s^1|h), \cdots, \mathbb{P}(s^k|h)]^\top$. This can be solved with a forward method similar to HMM~\cite{juang1991hidden}. Let $\Tilde{\mat{O}}_{ao} = \mathrm{diag}(\ten{O}_{s^1, a, o}, \ten{O}_{s^2, a, o}, \cdots, \ten{O}_{s^k, a, o})$,  $\Tilde{\mat{M}}_a = \mathrm{diag}(\bm{\Pi}_{s^1, a},\bm{\Pi}_{s^2, a}, \cdots, \bm{\Pi}_{s^k, a})$ and denote $\mat{E}_{ao} = \Tilde{\mat{M}}_{a}\ten{T}_{:, a, :}\Tilde{\mat{O}}_{ao}$. It can be shown that $\bm{b}(h)^\top = \bm{\mu}^\top \mat{E}_{a_1o_1}\cdots \mat{E}_{a_no_n}$ and $\bm{b}(\lambda) = \bm{\mu}^\top$, where $\lambda$ denotes the empty string. 
 
Similarly to the MDP setting, the state-level policy for POMDPs is  defined by $\mat{\Pi}\in [0,1]^{\mathcal{S}\times \mathcal{A}}$, where $\mat{\Pi}_{s,a} = \mathbb{P}(a|s)$. However, due to the partial observability, the agent's true state cannot be directly observed. 
Nonetheless, any state-level policy implicitly induces a probabilistic policy over past trajectories, defined by ${\Pi}(a|h) = \sum_{s\in\cS}\mathbb{P}(s|h)\mat{\Pi}_{s,a}$ for each $h \in (\cA\times \cO)^*$. Similarly, every state-level policy induces a probabilistic distribution over trajectories. With a slight abuse of notation, denote the probability of a trajectory $h$ under the policy $\Pi$ by $\mathbb{P}^\Pi(h)$. Here, we assume $\Pi$ is induced by a state-level policy $\mat{\Pi}$ and define 
$\mathbb{P}^\Pi(h) =  \vec{b}(h)^\top \vec{1}$,
where $\vec{1}$ is an all-one vector. To make clear of the notations, we will use $\pi: \Sigma^* \to \cA$ for deterministic policies in the later sections. 
%
\subsection{Predictive state representations}
One common approach for modelling the dynamics of a POMDP is the so-called predictive state representations (PSRs). A PSR is a model of a dynamical system in which the current state is represented as
a set of predictions about the future behavior of the system~\cite{littman2002predictive, singh2004predictive}. This is done by maintaining a set of action-observation pairs, called \emph{tests}, and the representation of the current state is given by the conditional probabilities of these tests given the past trajectory, which is referred to as \emph{history}.  Although there are multiple methods to select a set of tests~\cite{singh2003learning, james2004learning}, it has been shown that with a large action-observation set, finding these tests can be exponentially difficult~\cite{boots2011closing}. 

Instead of explicitly finding the tests,
\emph{transformed predictive state representations} (TPSRs)~\cite{rosencrantz2004learning, boots2011closing} offer an alternative. TPSRs implicitly estimate a linear transformation of the PSR via subspace methods. This approach drastically reduces the complexity of estimating a PSR model and has shown many benefits in different RL domains~\cite{boots2011closing,singh2004predictive}. 

Although this approach is able to obtain a small transformed space of the original PSRs, it still faces scalability problems.  Typically, one can obtain an estimate of TPSR by performing truncated SVD on the estimated \emph{system-dynamics matrix}~\cite{singh2004predictive}, which is indexed by histories and tests. The scalability issue arises in complex domains, which require a large number of histories and tests to form the system-dynamics matrix. As the time complexity of SVD is cubic in the number of histories and tests, the computation time explodes in these types of environments.

Compressed predictive state representations (CPSRs)~\cite{hamilton2013modelling} were introduced to circumvent this issue. The main idea of this approach is to project the high dimensional system-dynamics matrix onto a much smaller subspace spanned by randomly generated bases that satisfy the Johnson-Lindenstrauss (JL) lemma~\cite{johnson1984extensions}. The projection matrices corresponding to these bases are referred to as JL matrices. Intuitively, JL matrices define a low-dimensional embeddings which approximately preserves Euclidean distance between the projected points. More formally, given a matrix $\mat{H}\in \mathbb{R}^{m\times n}$ and JL random projection matrices   $\bm{\Phi}_1\in \mathbb{R}^{m\times d_1}$ and $\bm{\Phi}_2\in \mathbb{R}^{n\times d_2}$, the compressed matrix $\mat{H}_c$ is computed by:
$$\mat{H}_c = \bm{\Phi_1}^\top\mat{H}\bm{\Phi_2}$$
where $\mat{H}_c$ is the compressed matrix. The choice of random projection matrix is rather empirical and often depends on the task. 
Gaussian  matrices~\cite{baraniuk2009random} and Rademacher  matrices~\cite{achlioptas2003database} are common choices for the random projection matrices that satisfy JL lemma. Although does not satisfy JL lemma, hashed random projection have also been shown to preserve certain
kernel-functions and perform extremely well in practice~\cite{weinberger2009feature,shi2009hash}.
\subsection{Weighted finite automata (WFAs)}
In fact, TPSRs (CPSRs) are a subclass of a wider family called weighted finite automata (WFAs). More precisely, TPSRs belong to \emph{stochastic weighted finite automata} (SWFAs)~\cite{droste2009handbook} in the formal language community or \emph{observable operator models} (OOMs)~\cite{jaeger2000observable} in control theory. Further connections between SWFAs, OOMs and TPSRs are shown in~\cite{thon2015links}. WFAs are an extension to TPSRs in the sense that, instead of only computing the probabilities of trajectories, WFAs can compute functions with arbitrary scalar outputs over the given trajectories. Formally, a \emph{weighted finite automaton} (WFA) with $k$ states is a tuple $A=\langle \vec{\alpha},\{\mat{A}_\sigma\}_{\sigma \in \Sigma},\vec{\omega},\Sigma \rangle$, where $\vec{\alpha},\vec{\omega} \in \mathbb{R}^k$ are the initial and terminal weights; $\mat{A}_\sigma \in \mathbb{R}^{k\times k}$ is the transition matrix associated with symbol $\sigma$ from alphabet $\Sigma$. 
Given a trajectory $x = x_1x_2\cdots x_n \in \Sigma^*$, a WFA $A$ computes a function $f_A: \Sigma^* \rightarrow \mathbb{R}$ defined by:
$$f_A(x) = \alpha^\top \mat{A}_{x_1}\mat{A}_{x_2}\cdots \mat{A}_{x_n}\omega$$
We will denote $\mat{A}_{x_1}\mat{A}_{x_2}\cdots \mat{A}_{x_n}$ by $\mat{A}_x$ in the following sections for simplicity. For a function $f: \Sigma^* \to \mathbb{R}$, the \emph{rank} of $f$ is defined as the minimal number of states of a WFA computing $f$. If $f$ cannot be computed by a WFA, we let $\mathrm{rank}(f)=\infty$. 
In the context of TPSRs, we often let $\Sigma = A \times O$ and $f_A$ computes the probability of the trajectory. 

\subsubsection{Hankel matrix}
The learning algorithm of WFAs, relies on the spectral decomposition of the so-called Hankel matrix. The \emph{Hankel matrix} $\mat{H}_f\in\mathbb{R}^{\Sigma^*\times\Sigma^*}$ associated with a function  $f:\Sigma^*\to\mathbb{R}$
is a bi-infinite matrix with entries $(\mat{H}_f)_{u,v}=f(uv)$ for all words $u,v\in\Sigma^*$.
The spectral learning algorithm for WFA relies on the following fundamental relation between the rank of $f$ and the rank of the Hankel matrix $\mat{H}_f$~\cite{carlyle1971realizations,fliess1974matrices}:
\begin{theorem}
For any $f:\Sigma^*\to\mathbb{R}$, $\mathrm{rank}(f)=\mathrm{rank}(\mat{H}_f)$.
\end{theorem}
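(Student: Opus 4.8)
The plan is to prove the two inequalities $\mathrm{rank}(\mat{H}_f)\le\mathrm{rank}(f)$ and $\mathrm{rank}(f)\le\mathrm{rank}(\mat{H}_f)$ separately; the second direction is the substantive one. For the first, suppose $f$ is computed by a WFA $A=\langle\boldsymbol{\alpha},\{\mat{A}_\sigma\}_{\sigma\in\Sigma},\boldsymbol{\omega},\Sigma\rangle$ with $k=\mathrm{rank}(f)$ states (if $\mathrm{rank}(f)=\infty$ there is nothing to prove). Using multiplicativity $\mat{A}_{uv}=\mat{A}_u\mat{A}_v$, write $(\mat{H}_f)_{u,v}=f(uv)=(\boldsymbol{\alpha}^\top\mat{A}_u)(\mat{A}_v\boldsymbol{\omega})$, so $\mat{H}_f=\mat{P}\mat{S}$ where $\mat{P}\in\mathbb{R}^{\Sigma^*\times k}$ has $u$-th row $\boldsymbol{\alpha}^\top\mat{A}_u$ and $\mat{S}\in\mathbb{R}^{k\times\Sigma^*}$ has $v$-th column $\mat{A}_v\boldsymbol{\omega}$. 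A product that factors through $\mathbb{R}^k$ has rank at most $k$, hence $\mathrm{rank}(\mat{H}_f)\le k=\mathrm{rank}(f)$.

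For the second inequality, assume $r:=\mathrm{rank}(\mat{H}_f)<\infty$ and construct a WFA with $r$ states. Let $V\subseteq\mathbb{R}^{\Sigma^*}$ be the row space of $\mat{H}_f$, so $\dim V=r$, and for $u\in\Sigma^*$ let $\mat{h}_u\in V$ be the $u$-th row, $(\mat{h}_u)_v=f(uv)$; by definition the rows $\{\mat{h}_u\}_{u\in\Sigma^*}$ span $V$. For each $\sigma\in\Sigma$ I would define a ``left shift'' operator $\mat{B}_\sigma:V\to V$ by $\mat{B}_\sigma\mat{h}_u=\mat{h}_{u\sigma}$, extended linearly. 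Take $\boldsymbol{\alpha}:=\mat{h}_\lambda$ as the initial state and the linear functional $\boldsymbol{\omega}:V\to\mathbb{R}$, $\mat{v}\mapsto\mat{v}_\lambda$, as the terminal weights. A straightforward induction on $|w|$ gives $\mat{B}_{w_n}\cdots\mat{B}_{w_1}\mat{h}_\lambda=\mat{h}_w$ for $w=w_1\cdots w_n$, whence $\boldsymbol{\omega}(\mat{h}_w)=(\mat{h}_w)_\lambda=f(w)$. Fixing a basis of $V$ turns $\boldsymbol{\alpha}$, the $\mat{B}_\sigma$, and $\boldsymbol{\omega}$ into concrete vectors and matrices (transposing if needed to match the convention $f_A(x)=\boldsymbol{\alpha}^\top\mat{A}_{x_1}\cdots\mat{A}_{x_n}\boldsymbol{\omega}$), exhibiting a WFA with $r$ states computing $f$, so $\mathrm{rank}(f)\le r=\mathrm{rank}(\mat{H}_f)$.

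The main obstacle is showing that $\mat{B}_\sigma$ is well defined, i.e.\ independent of how an element of $V$ is expressed as a combination of rows: one must check that every linear dependence $\sum_i c_i\mat{h}_{u_i}=0$ is preserved after shifting, $\sum_i c_i\mat{h}_{u_i\sigma}=0$. This is exactly where the Hankel structure is used: $\sum_i c_i(\mat{h}_{u_i\sigma})_v=\sum_i c_i f(u_i\sigma v)=\sum_i c_i(\mat{h}_{u_i})_{\sigma v}=0$, since $\sum_i c_i\mat{h}_{u_i}$ vanishes at every coordinate, in particular at the coordinate indexed by $\sigma v$. The only other point needing (minor) care is the bookkeeping between the bi-infinite matrix $\mat{H}_f$ and finite-dimensional linear algebra: because we assumed $r<\infty$, the space $V$ is genuinely finite-dimensional and all the maps above are honest linear maps, so no analytic subtleties arise.

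Combining the two inequalities gives $\mathrm{rank}(f)=\mathrm{rank}(\mat{H}_f)$ whenever either side is finite; and the first inequality in contrapositive form shows that $\mathrm{rank}(\mat{H}_f)=\infty$ forces $\mathrm{rank}(f)=\infty$, so the identity holds in full generality with the convention $\mathrm{rank}(\infty)=\infty$.
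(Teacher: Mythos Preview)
Your proof is correct. Note, however, that the paper does not actually prove this theorem: it states it as a classical result with citations to Carlyle--Paz (1971) and Fliess (1974), so there is no ``paper's own proof'' to compare against. What you have written is essentially the standard argument from those references: the factorization $\mat{H}_f=\mat{P}\mat{S}$ through $\mathbb{R}^k$ for the easy direction, and for the converse the construction of a WFA on the row space of $\mat{H}_f$ via the shift operators $\mat{h}_u\mapsto\mat{h}_{u\sigma}$, with the Hankel identity $(\mat{h}_{u\sigma})_v=(\mat{h}_u)_{\sigma v}$ supplying well-definedness. Your handling of the infinite-rank case and the transposition needed to match the paper's left-to-right convention are both fine.
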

In practice, one deals with finite sub-blocks of the Hankel matrix. 
Given a basis $\mathcal{B}=(\cU, \cV)\subset \Sigma^*\times \Sigma^*$, where $\cU$ is a set of \emph{prefixes} and $\cV$ is a set of \emph{suffixes},
denote the corresponding sub-block of the Hankel matrix by $\mat{H}_\mathcal{B}\in\R^{\cU\times \cV}$.
For an arbitrary basis $\mathcal{B}=(\cU, \cV)$, define its \emph{p-closure} by $\mathcal{B}^\prime=(\mathcal{U^\prime}, \cV),]$
where $\mathcal{U^\prime}=\cU\cup \cU\Sigma$.
 It turns out that a Hankel matrix over a p-closed basis can be partitioned
into $|\Sigma| + 1$ blocks of the same size~\cite{balle2014spectral}: 
$$\textbf{H}_{\mathcal{B}^\prime}^\top=[\textbf{H}_\lambda^\top| \textbf{H}_{\sigma_1}^\top|\cdots| \textbf{H}_{\sigma_{|\Sigma|}}^\top]$$, 
where $\lambda$ denotes the empty string and for each $\sigma\in\Sigma\cup\{\lambda\}$ the matrix $\mat{H}_{\sigma}\in\R^{\cU\times \cV}$ is defined by
$(\mat{H}_{\sigma})_{u, v}=f(u\sigma v)$.
We say that a basis $\cB =(\cU, \cV)$ is \emph{complete} for the function $f$ if the sub-block $\mat{H}_{\cB}$ has full rank: $\mathrm{rank}(\mat{H}_{\cB}) =\mathrm{rank}{(\mat{H}_f)}$ and we call $\mat{H}_{\cB}$ a \emph{complete sub-block} of $\mat{H}_f$ and $\mat{H}_{\cB^\prime}$ a prefix-closure of $\mat{H}_{\cB}$. It turns out that one can recover the WFA that realizes $f$ via the prefix-closure of a complete sub-block of $\mat{H}_f$~\cite{balle2014spectral} using the \emph{spectral learning} algorithm of WFAs. 

\subsubsection{Spectral learning of WFAs}
It can be shown that the rank of the Hankel matrix $\mat{H}$ is upper bounded by the rank of $f$~\cite{balle2014methods}. Moreover, given a rank factorization of the Hankel matrix $\mat{H} = \mat{PS}$, it is also true that 
$\mat{H}_{\sigma}=\mat{P}\mat{A}_\sigma\mat{S}$ for each $\sigma\in\Sigma$.
The spectral learning algorithm relies on the non-trivial observation that this construction can be reversed:
given any rank $k$ factorization $\mat{H}_\lambda=\mat{PS}$, the WFA  
$A=\langle \bm{\alpha}_0^{\top}, \bm{\alpha}_\infty, \{\textbf{A}_\sigma\}_{\sigma\in\Sigma}\rangle$ defined by
$$\bm{\alpha}_0^{\top}=\mat{P}_{\lambda,:},\ \ \bm{\alpha}_\infty=\mat{S}_{:,\lambda},\ \text{  and
}\textbf{A}_\sigma=\textbf{P}^+ \textbf{H}_\sigma \textbf{S}^+,$$ 
is a minimal WFA computing $f$~\cite[Lemma 4.1]{balle2014spectral}, where
$\mat{H}_\sigma$ for $\sigma\in\Sigma\cup{\lambda}$ denote the finite matrices defined above for a prefix closed complete
basis $\mathcal{B}$. In practice, we compute empirical estimates of the Hankel matrices such that $\hat{\mat{H}}_{u,v} = \frac{1}{|D|}\sum_{i \in |D|}\mathbb{I}_{uv}(D_i)\vec{y}_i$, where $D$ is a dataset of trajectories, $\vec{y}$ is a vector of the outputs of $D$, $\mathbb{I}_{uv}(D_i) = 1$ if $uv = D_i$ and zero otherwise. 


In fact, the above algorithm can be readily used for learning TPSRs. In PSRs terminology, the prefixes are the histories while the suffixes are the tests and the alphabet $\Sigma$ is the set of all possible action observation pairs, i.e. $\Sigma = \cA\times\cO$. By simply replacing Hankel matrix $\mat{H}$ by the system-dynamics matrix proposed in~\cite{singh2004predictive}, one will exactly fall back to TPSRs learning algorithm~\cite{singh2004predictive, boots2011closing}.

\section{Planning with Unnormalized Q Function}

In this section, we will introduce our POMDP planning method. The main idea of our algorithm is to directly  compute the optimal policy based on the estimation of \emph{\longmethod{}} that is proportional to the action value function. Moreover, the value of this function, given a past trajectory, can be computed via a WFA and it is then straight-forward to use the classical spectral learning algorithm to recover this WFA. Unlike traditional PSR methods, our approach takes advantage of the reward information by integrating the reward into the learned representations. In contrast, classical PSRs based methods construct the representations solely with the environment dynamics, completely ignoring the reward information. Consequently, our method offers a more sample efficient representation of the environment for planning under POMDPs. In addition, our algorithm only needs to construct a WFA and there is no other iterative planning method involved. Therefore, compared to traditional methods to plan with PSRs, our algorithm is more time efficient. Finally, with the help of compressed sensing techniques, we are able to scale our algorithm to complex domains.
\subsection{\longmethodcaps{}}
The estimation of the action value function is of great importance for planning under POMDP.  Typically, given a probabilistic sampling policy $\Pi: \cA\times\Sigma^* \to [0,1]$, where $\Sigma = \cA\times\cO$, the action value function (Q function) of a given trajectory $h \in\Sigma^*$ is defined by:
\begin{align*}
    Q^\Pi(h, a) = \mathbb{E}^\Pi(r_t + \gamma r_{t+1} + \cdots + \gamma^i r_{t+i} +\cdots|ha)
\end{align*}
where $|h| = t$ and $r_t$ is the immediate rewards collected at time step $t$. 

Given a POMDP $\psi = \langle \ten{T},\ten{O}, \mat{r}, \cA,\cO, \cS,\vec{\mu},\gamma\rangle$, denote the expected immediate reward collected after $h$ by $\tilde{R}(h)$, which is defined as:
    $$\tilde{R}(h) = \mathbb{E}_{s\in \cS}(\vec{r}_s|h)
    = \sum_{s \in \cS}\vec{r}_s\mathbb{P}(s|h)$$
The action value function can then be expanded to:
\begin{align*}
    Q^\Pi(h, a) &= \mathbb{E}^\Pi(r_t + \gamma r_{t+1} + \cdots + \gamma^i r_{t+i} +\cdots|ha)\\
    & = {\sum_{z \in \Sigma^*}\sum_{o\in\cO}\gamma^{|z|}\tilde{R}(haoz)\mathbb{P}^{\Pi}(haoz|ha)}\\
    & = \frac{\sum_{o\in\cO}\sum_{z \in \Sigma^*}\gamma^{|z|}\tilde{R}(haoz)\mathbb{P}^{\Pi}(haoz)}{\mathbb{P}^{\Pi}(ha)}\\
    & = \frac{\sum_{o\in\cO}\sum_{z \in \Sigma^*}\gamma^{|z|}\tilde{R}(haoz)\mathbb{P}^{\Pi}(haoz)/\Pi(a|h)}{\mathbb{P}^{\Pi}(h)}\\
    &:= \frac{\tilde{Q}^{\Pi}(h, a)}{\Prob^{\Pi}(h)}
\end{align*}
where we will refer to the function $\tilde{Q}^\Pi(h, a)$ as the \emph{\longmethod{}}(\method{}). 
It is trivial to show that given the same trajectory $h$:
$$\tilde{Q}^\Pi(h, \cdot) \propto Q^\Pi(h, \cdot)$$
Therefore, we have $\argmax_{a\in\mathcal{A}}{Q}^\Pi(h, a) = \argmax_{a\in\mathcal{A}}\tilde{Q}^\Pi(h, a)$ and we can then plan according to the \method{} instead of $Q^\Pi$. 

\subsection{A spectral learning algorithm for \method{}}
In this section, we will present our spectral learning algorithm for \method{}. First, we will show that the value of a UQF given a past trajectory can be computed via a WFA. Let us denote $\sum_{z \in \Sigma^*}\gamma^{|z|}\tilde{R}(hz)\mathbb{P}^{\Pi}(hz)$ by $\tilde{V}^\Pi(h)$, we have:
$$\tilde{Q}^\Pi(h, a) = \frac{\sum_{o \in \cO}\tilde{V}^\Pi(hao)}{{\Pi}(a|h)}$$
Assume the probabilistic sampling policy $\Pi$ is given, then we only need to compute the value of the function $\tilde{V}^{\Pi}(hao)$. As a special case, if $\Pi$ is a random policy that uniformly select the actions, we can replace the term $\Pi(a|h)$ by $1$ without affecting the learned policy. 

It turns out that the function $\tilde{V}^\Pi$ can be computed by a WFA. To show that this is true,  we first introduce the following lemma stating that the function $\tilde{R}(h)\mathbb{P}^\Pi(h)$ can be computed by a WFA $B$:
\begin{lemma}
\label{rp}
Given a POMDP $\psi = \langle \ten{T},\ten{O}, \mat{r}, \cA,\cO, S,\vec{\mu},\gamma\rangle$ of size k and a sampling policy $\Pi$ induced by $\mat{\Pi}\in [0,1]^{\cS\times \cA}$, there exists a WFA $B = \langle \vec{\beta}, \{\mat{B}_{\sigma}\}_{\sigma\in \Sigma}, \vec{\tau} \rangle$ with $k$ states that realizes the function $g(h) = \tilde{R}(h)\mathbb{P}^\Pi(h)$, where $\Sigma = \cA \times \cO$ and $h \in \Sigma^*$.
\end{lemma}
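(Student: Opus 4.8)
The plan is to write $g$ in closed matrix-product form using the forward (belief-update) recursion already recorded in the background section, and then read off a $k$-state WFA directly from that expression. Concretely, recall that for $h = a_1 o_1 \cdots a_n o_n$ the unnormalized belief vector satisfies $\vec{b}(h)^\top = \vec{\mu}^\top \mat{E}_{a_1 o_1} \cdots \mat{E}_{a_n o_n}$ with $\mat{E}_{ao} = \tilde{\mat{M}}_a \ten{T}_{:,a,:} \tilde{\mat{O}}_{ao}$, and that $\vec{b}(h)_s$ is the joint weight $\mathbb{P}^\Pi(s, h)$, so that $\mathbb{P}^\Pi(h) = \vec{b}(h)^\top \vec{1} = \sum_{s \in \cS} \mathbb{P}^\Pi(s,h)$.

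First I would show that the normalization hidden in $\tilde{R}$ cancels against the factor $\mathbb{P}^\Pi(h)$. By definition $\tilde{R}(h) = \sum_{s\in\cS} \vec{r}_s \mathbb{P}^\Pi(s \mid h) = \sum_{s\in\cS}\vec{r}_s \mathbb{P}^\Pi(s,h)/\mathbb{P}^\Pi(h)$, hence $g(h) = \tilde{R}(h)\,\mathbb{P}^\Pi(h) = \sum_{s\in\cS}\vec{r}_s\,\mathbb{P}^\Pi(s,h) = \vec{b}(h)^\top \vec{r}$. Combining this with the forward recursion yields the clean closed form
\[
g(h) = \vec{\mu}^\top \mat{E}_{a_1 o_1} \cdots \mat{E}_{a_n o_n} \vec{r}, \qquad g(\lambda) = \vec{\mu}^\top \vec{r}.
\]

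Then I would simply exhibit the WFA. Take $\Sigma = \cA\times\cO$, set $\vec{\beta} = \vec{\mu}$, $\vec{\tau} = \vec{r}$, and $\mat{B}_{ao} = \mat{E}_{ao}$ for each $(a,o)\in\Sigma$. All of $\vec{\beta},\vec{\tau}\in\mathbb{R}^k$ and each $\mat{B}_{ao}\in\mathbb{R}^{k\times k}$, so $B = \langle \vec{\beta}, \{\mat{B}_\sigma\}_{\sigma\in\Sigma}, \vec{\tau}\rangle$ is a WFA with $k$ states, and by the defining identity of $f_B$ we get $f_B(h) = \vec{\beta}^\top \mat{B}_{a_1o_1}\cdots\mat{B}_{a_no_n}\vec{\tau} = g(h)$ for every $h\in\Sigma^*$, including $h=\lambda$.

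The only step requiring genuine care is the first one: one must confirm that the recursion $\vec{b}(h)^\top = \vec{\mu}^\top \mat{E}_{a_1 o_1}\cdots \mat{E}_{a_n o_n}$ from the background section really folds in both the policy probabilities $\mat{\Pi}_{s,a}$ (via $\tilde{\mat{M}}_a$) and the observation probabilities (via $\tilde{\mat{O}}_{ao}$), so that $\vec{b}(h)^\top\vec{1}$ is genuinely $\mathbb{P}^\Pi(h)$ rather than some partially normalized quantity; once that is pinned down there is no real obstacle, as the lemma is essentially a repackaging of the belief-state recursion as a WFA. A secondary point worth one sentence is well-definedness of $g$ on all of $\Sigma^*$ despite $\tilde{R}$ being a conditional expectation: on histories $h$ with $\mathbb{P}^\Pi(h)=0$ we set $g(h) = \vec{b}(h)^\top\vec{r} = 0$, consistent with the convention that a zero-probability history contributes zero reward, and this is exactly what the WFA outputs.
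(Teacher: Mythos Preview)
Your proposal is correct and follows exactly the paper's approach: the paper constructs the very same WFA with $\vec{\beta}=\vec{\mu}$, $\mat{B}_{ao}=\tilde{\mat{M}}_a\ten{T}_{:,a,:}\tilde{\mat{O}}_{ao}=\mat{E}_{ao}$, and $\vec{\tau}=\vec{r}$, and then simply asserts that ``by construction, one can check'' it computes $g$. Your write-up actually supplies that check (the cancellation $\tilde{R}(h)\mathbb{P}^\Pi(h)=\vec{b}(h)^\top\vec{r}$) and flags the well-definedness at zero-probability histories, so it is strictly more detailed than, but identical in substance to, the paper's proof.
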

\begin{proof}
Let $s^i$ denote the $i^\text{th}$ state and let $$\Tilde{\mat{O}}_{ao} = \mathrm{diag}(\ten{O}_{s^1, a, o}, \ten{O}_{s^2, a, o}, \cdots, \ten{O}_{s^k, a, o}),$$
$$\Tilde{\mat{M}}_a = \mathrm{diag}(\bm{\Pi}_{s^1, a},\bm{\Pi}_{s^2, a}, \cdots, \bm{\Pi}_{s^k, a}).$$
We can construct a WFA $B = \langle \vec{\beta}^\top, \{\mat{B}_{\sigma}\}_{\sigma\in \Sigma}, \vec{\tau} \rangle$ such that: $\vec{\beta}^\top = \vec{\mu}^\top$, $\mat{B}_{\sigma} = \mat{B}_{ao} = \Tilde{\mat{M}}_{a}\ten{T}_{:, a, :}\Tilde{\mat{O}}_{ao}$, $\bm{\tau} =  \mat{r}$.
Then by construction, one can check that the WFA $B$ computes the function $g$, which also shows that the rank of the function $g$ is at most $k$. 
\end{proof}
In fact, we can show that the function $\tilde{V}^\Pi$ can be computed by another WFA $A$, and one can easily convert $B$ to $A$. 
\begin{theorem}
\label{vtilde}
Given a POMDP $\psi$ of size k, a sampling policy $\Pi$ and a WFA $B = \langle \vec{\beta}^\top, \{\mat{B}_{\sigma}\}_{\sigma\in \Sigma}, \vec{\tau} \rangle$ realizing the function $g:h \mapsto \tilde{R}(h)\mathbb{P}^\Pi(h)$ \emph{such that the spectral radius $\rho(\gamma\sum_{\sigma\in\Sigma}\mat{B}_{\sigma})<1$}, the WFA $A = \langle \vec{\beta}^\top, \{\mat{B}_{\sigma}\}_{\sigma\in \Sigma}, (\mat{I} - \gamma\sum_{\sigma \in \Sigma}\mat{B}_{\sigma})^{-1}\vec{\tau} \rangle$  of size $k$ realizes the function $\tilde{V}^\Pi(h) = \sum_{z \in \Sigma^*}\gamma^{|z|}\tilde{R}(hz)\mathbb{P}^{\Pi}(hz)$.
\end{theorem}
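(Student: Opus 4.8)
The plan is to compute $f_A(h)$ in closed form and recognize that it coincides with $\tilde V^\Pi(h)$ via a matrix geometric (Neumann) series. First I would invoke Lemma~\ref{rp}: since $B$ realizes $g(w)=\tilde R(w)\mathbb{P}^\Pi(w)$, we have for all $h,z\in\Sigma^*$ that $\tilde R(hz)\mathbb{P}^\Pi(hz)=g(hz)=\vec\beta^\top\mat B_h\mat B_z\vec\tau$, where $\mat B_z=\mat B_{z_1}\cdots\mat B_{z_n}$ for $z=z_1\cdots z_n$ and $\mat B_{hz}=\mat B_h\mat B_z$. Substituting this into the definition $\tilde V^\Pi(h)=\sum_{z\in\Sigma^*}\gamma^{|z|}\tilde R(hz)\mathbb{P}^\Pi(hz)$ and factoring out $\vec\beta^\top\mat B_h$, which does not depend on $z$, gives $\tilde V^\Pi(h)=\vec\beta^\top\mat B_h\big(\sum_{z\in\Sigma^*}\gamma^{|z|}\mat B_z\big)\vec\tau$.

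The second step is the key algebraic identity. Grouping the sum over $z$ by length $n=|z|$ and using that $\sum_{z:\,|z|=n}\mat B_z=\big(\sum_{\sigma\in\Sigma}\mat B_\sigma\big)^n$ — each ordered product of $n$ transition matrices arises exactly once when $\sum_{\sigma}\mat B_\sigma$ is raised to the $n$-th power and expanded — we obtain $\sum_{z\in\Sigma^*}\gamma^{|z|}\mat B_z=\sum_{n=0}^\infty\big(\gamma\sum_{\sigma\in\Sigma}\mat B_\sigma\big)^n$. This is exactly where the hypothesis $\rho\big(\gamma\sum_{\sigma\in\Sigma}\mat B_\sigma\big)<1$ is used: it guarantees that this matrix series converges (absolutely, entrywise), that $\mat I-\gamma\sum_{\sigma\in\Sigma}\mat B_\sigma$ is invertible — so $A$ is well defined — and that the series sums to $(\mat I-\gamma\sum_{\sigma\in\Sigma}\mat B_\sigma)^{-1}$.

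Combining the two steps yields $\tilde V^\Pi(h)=\vec\beta^\top\mat B_h(\mat I-\gamma\sum_{\sigma\in\Sigma}\mat B_\sigma)^{-1}\vec\tau$, which is precisely $f_A(h)$ for the WFA $A=\langle\vec\beta^\top,\{\mat B_\sigma\}_{\sigma\in\Sigma},(\mat I-\gamma\sum_{\sigma\in\Sigma}\mat B_\sigma)^{-1}\vec\tau\rangle$; and $A$ manifestly has $k$ states, which proves the claim. The only point requiring care — and the closest thing to an obstacle — is rigorously justifying the interchange of the (infinite) sum over $z$ with the matrix–vector products and the re-indexing by length; since the spectral radius condition makes all the series here absolutely convergent, rearrangement is legitimate, but this should be stated rather than glossed over. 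Everything else is routine bookkeeping.
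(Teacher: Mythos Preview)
Your proposal is correct and follows essentially the same approach as the paper: substitute the WFA expression $\vec\beta^\top\mat B_h\mat B_z\vec\tau$ for $g(hz)$, factor out $\vec\beta^\top\mat B_h$, group the sum over $z$ by length to obtain $\sum_{n\ge 0}(\gamma\sum_\sigma\mat B_\sigma)^n$, and apply the Neumann series identity under the spectral radius assumption. If anything, you are more explicit than the paper about why the rearrangement and the Neumann sum are justified.
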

\begin{proof}
By definition of the function $\tilde{V}^\Pi$, we have:
\begin{align*}
    \tilde{V}^\Pi(h) &= \sum_{z \in \Sigma^*}\gamma^{|z|}\tilde{R}(hz)\mathbb{P}^{\Pi}(hz)\\
    & = \sum_{z \in \Sigma^*}\gamma^{|z|}\vec{\beta}^\top \mat{B}_h \mat{B}_z\vec{\tau}\\
    & =\vec{\beta}^\top \mat{B}_h (\sum_{z \in \Sigma^*}\gamma^{|z|}\mat{B}_z)\vec{\tau}\\
    & = \vec{\beta}^\top \mat{B}_h (\sum_{i = 0}^\infty(\gamma \sum_{\sigma\in\Sigma}\mat{B}_{\sigma})^{i})\vec{\tau}\\
    & = \vec{\beta}^\top \mat{B}_h(\mat{I} - \gamma \sum_{\sigma \in \Sigma}\mat{B}_\sigma)^{-1}\vec{\tau}
\end{align*}
Here we applied Neumann identity: $\sum_{i = 0}^{\infty} \mat{T}^i = (\mat{I} - \mat{T})^{-1}$, which holds when $\rho(\mat{T})<1$. Therefore, the WFA $A = \langle \vec{\beta}^\top, \{\mat{B}_{\sigma}\}_{\sigma\in \Sigma}, (\mat{I} - \gamma\sum_{\sigma \in \Sigma}\mat{B}_{\sigma})^{-1}\vec{\tau} \rangle$ realizes the function $\tilde{V}^\Pi$. 
\end{proof}
Therefore, in order to compute the function $\tilde{Q}^\Pi$, we only need to learn a WFA that computes the function $g$. Following the classical spectral learning algorithm, we present our learning algorithm of POMDP planning in Algorithm~\ref{algo: qwfa}. In fact, it has been shown that the spectral learning algorithm of WFAs is statistically consistent~\cite{balle2014spectral}. Therefore our approximation of the function $\tilde{Q}^\Pi$ is consistent with respect to sample sizes. 
\RestyleAlgo{ruled}
\begin{algorithm}[h!]
\label{algo: qwfa}
\SetAlgoLined
\SetKwInOut{Input}{input}
\Input{A set of actions $\cA$, a set of observations $\cO$, discount factor $\gamma$, a probabilistic sampling policy $\Pi$, training trajectories ${D}$ and their immediate reward $\vec{y}$, rank of the truncated SVD $k$}
\KwResult{A new deterministic policy function $\pi^{new}: \Sigma^* \rightarrow \cA$}
\begin{enumerate}
    \item For a prefix $u$ and a suffix $v$, we estimate its value in the Hankel matrix as $\hat{\mat{H}}_{u, v} = \sum_{i = 0}^{|D|} \mathbb{I}_{uv}(D_i)\vec{y}_i/|D|$, where $|D|$ is the cardinality of the training set $D$, $\Sigma = \cA \times \cO$.
    \item Perform truncated SVD of rank $k$ on the estimated Hankel matrix:
    $\hat{\mat{H}} \simeq \mat{UDV}^\top$
    \item Recover the WFA $B = \langle \vec{\beta}^\top, \{\mat{B}_{\sigma}\}_{\sigma\in \Sigma}, \vec{\tau}\rangle$ realizing the function $g(h) = \tilde{R}(h)\mathbb{P}(h)$:
    $\vec{\beta}^\top = (\mat{U\Lambda})_{\lambda, :}$, $\vec{\tau} = \mat{V}^{\top}_{:,\lambda}$, $\mat{B}_{\sigma} = (\mat{UD})^+\hat{\mat{H}}_{\sigma}\mat{V}^{\top}$
    \item Convert the WFA $B$ to $A= \langle \vec{\alpha}^\top, \{\mat{A}_{\sigma}\}_{\sigma\in \Sigma}, \vec{\omega}\rangle$, which realizes the function $\tilde{V}^\Pi$, following Theorem~\ref{vtilde}, we have $\vec{\alpha}^\top = \vec{\beta}^\top$, $\mat{A}_{\sigma} = \mat{B}_{\sigma}$, and $\vec{\omega} = (\mat{I} - \gamma \sum_{\sigma\in\Sigma}\mat{B}_\sigma)^{-1}\vec{\tau}$.
    \item \textbf{Return} A new deterministic policy function $\pi^{new}$, such that given $h\in \Sigma^*$, $\pi^{new}(h) = \argmax_{a \in \mathcal{A}}\frac{\sum_{o \in \cO}\vec{\alpha}^\top\mat{A}_h\mat{A}_{ao}\vec{\omega}}{\Pi(a|h)}$
\end{enumerate}
 \caption{Spectral algorithm for UQF}
\end{algorithm}
\subsection{Scalable learning of UQF}
Now we have established the spectral learning algorithm for UQF. However, similar to the spectral learning algorithm for TPSRs, one can immediately observe 
that both time  and storage complexity are the bottleneck of this algorithm. For complex domains,
in order to obtain a complete sub-block of the Hankel matrix, one will need large amount of prefixes and suffixes to form a basis and the classical spectral learning will become intractable.

By projecting matrices down to low-dimensional spaces via randomly generated bases, \emph{matrix compressed sensing} has been widely applied in matrix compression field. In fact, previous work have successfully applied matrix sensing techniques to TPSRs~\cite{hamilton2013modelling} and developed an efficient online algorithm for learning TPSRs~\cite{hamilton2014efficient}. Here, we adopt a similar approach. 

Assume that we are given a set of prefixes $\cU$ and suffixes $\cV$ and two independent random full-rank Johnson-Lindenstrauss (JL) projection matrices $\bm{\Phi}_\cU \in \mathbb{R}^{\cU\times d_\cU}$, and $\bm{\Phi}_\cV \in \mathbb{R}^{\cV\times d_\cV}$, where $d_\cU$ and $d_\cV$ are the projection dimension for the prefixes and suffixes. 
In this work, we use Gaussian projection matrices for $\bm{\Phi}_\cU$ and $\bm{\Phi}_\cV$, which contain i.i.d. entries from the distribution $\cN(0, 1/d_\cU)$ and $\cN(0, 1/d_\cV)$, respectively.

Let us now define two injective functions over prefixes and suffixes: $\phi_\cU: \cU \to \mathbb{R}^{d_\cU}$ and $\phi_{\cV}: \cV \to \mathbb{R}^{d_\cV}$, where for all $u\in\cU$ and $v\in\cV$, we have $\phi(u) = \bm{\Phi}_{u,:}$ and  $\phi(v) = \bm{\Phi}_{v,:}$. The core step of our algorithm is to obtain the compressed estimation of the Hankel matrix, denoted by $\hat{\mat{C}}_{\cU, \cV}$ associated with the function $\tilde{R}(h)\mathbb{P}(h)$ for all $h\in \Sigma^*$. Formally, we can obtain $\hat{\mat{C}}_{\cU, \cV}$ by:
\begin{alignat*}{2}
    \hat{\mat{C}}_{\cU, \cV} &= \bm{\Phi}_{\cU}^\top\mat{H}\bm{\Phi}_{\cV}\\
    &= \sum_{i =0}^{|D|}\sum_{u, v \in \cU\times \cV} \mathbb{I}_{uv}(D_i)\vec{y}_i(\phi_U(u)\otimes\phi_V(v))
\end{alignat*}
where $D$ is the training dataset, containing all sampled trajectories, $\vec{y}$ is the vector of immediate rewards. Then, after performing the truncated SVD of $\hat{\mat{C}}_{\cU, \cV} \simeq \mat{UDV}^\top$ of rank $k$, we can compute the transition matrix for the WFA by:
\begin{figure*}
    \centering
    \vspace{-1.2\baselineskip}
    \includegraphics[width=0.8\linewidth]{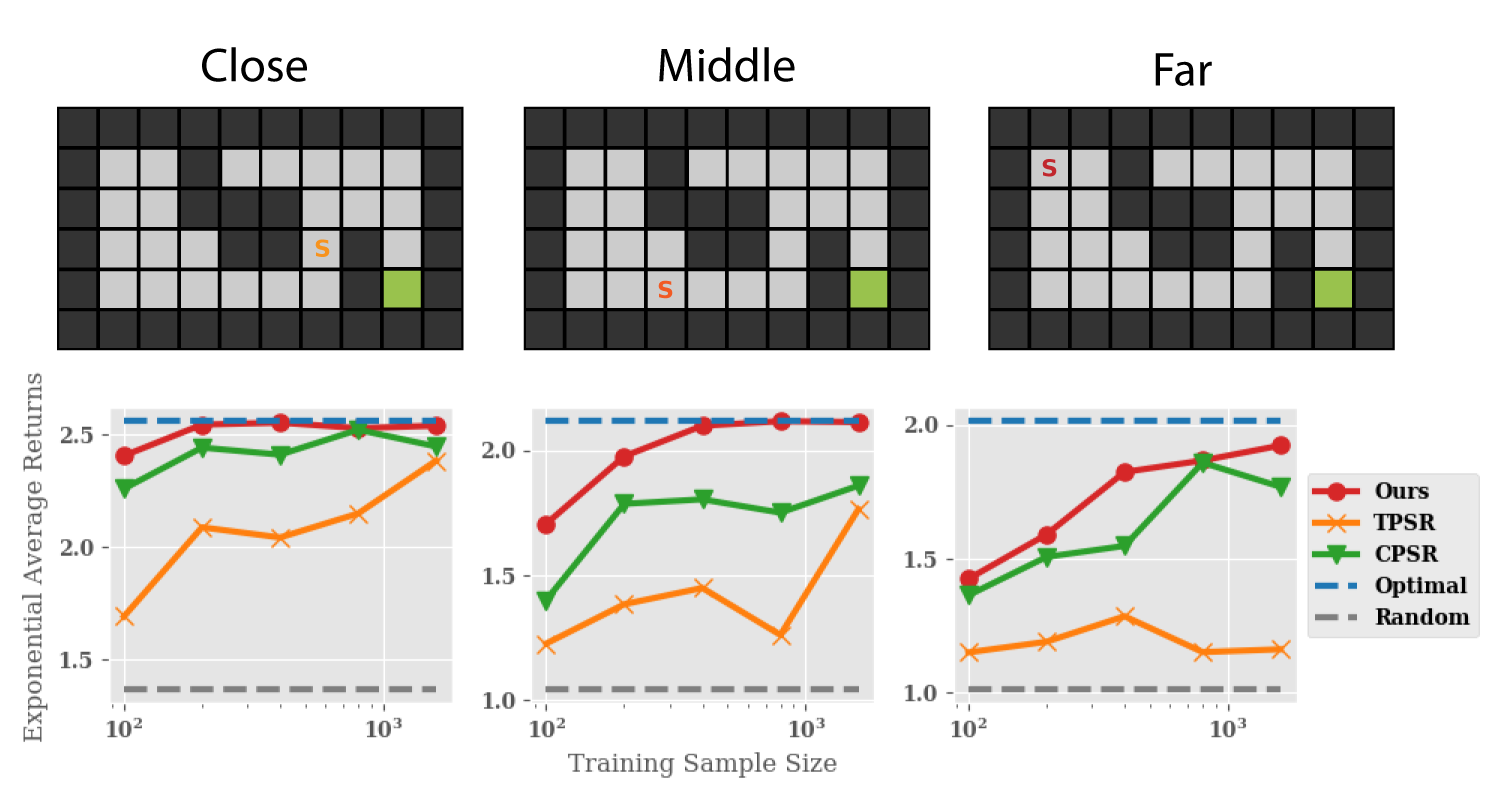}
    \caption{Experiments on three grid world tasks. The plots show the accumulated discounted rewards (returns) over 1,000 test episodes of length 100. The discount factor for computing returns is set to 0.99}
    \label{fig:gridworld_env}
    \vspace{-0.5\baselineskip}
\end{figure*}
\begin{alignat*}{2}
    \mat{B}_{\sigma}  &= (\mat{UD})^+\hat{\mat{C}}_{\cU \sigma \cV}\mat{V}\\
    &= \sum_{i =0}^{|D|}\sum_{u, v \in \cU\times \cV}\mathbb{I}_{u\sigma v}(D_i)\vec{y}_i[(\mat{UD})^+\phi(u)\otimes \mat{V}^\top\phi(v)]
\end{alignat*}
We present the complete method in Algorithm~\ref{algo-ccfq}. Instead of iterative sweeping through dataset like most planning methods do, one can build an UQF in just two passes of data: one for building the 
compressed Hankel, one for recovering the parameters. More precisely, let $L$ denote the maximum length of a trajectory in the dataset $D$, then the time complexity of our algorithm is $O(L|D|)$~\cite{hamilton2014efficient}, and there is no extra planning time needed. In contrast, fitted-Q algorithm alone requires $O(TL|D|log(L|D|))$ only for the planning stage, where $T$ is the expected number of the fitted-Q iterations. Therefore, in terms of time complexity, our algorithm is linear to the number of trajectories, leading to a very efficient algorithm.
\RestyleAlgo{ruled}
\begin{algorithm}[h!]
\label{algo-ccfq}
\SetAlgoLined
\SetKwInOut{Input}{input}
\Input{A set of actions $\cA$, a set of observations $\cO$, discount factor $\gamma$, a probabilistic sampling policy $\Pi$, training trajectories ${D}$ and their immediate reward $\vec{y}\in\mathbb{R}^{|D|}$, the rank of the truncated SVD $k$, a set of prefixes  $\cU$, a set of suffixes $\cV$, mapping functions for both prefixes and suffixes, $\phi_U$, $\phi_V$, and the corresponding projection matrix $\mat{\Phi}_{\cU}$}
\KwResult{A new deterministic policy function $\pi^{new}: \Sigma^* \rightarrow \cA$}
\begin{enumerate}
    \item Compute the compressed estimation of Hankel matrices:
    \vspace*{-0.3cm}
    \begin{center}
   $\hat{\mat{c}}_\cU = \sum_{i =0}^{|D|}\sum_{u \in \cU} \mathbb{I}_v(D_i)\vec{y}_i\phi_U(u)$\\ \vspace*{0.3cm}
    $\hat{\mat{C}}_{\cU\cV} = \sum_{i =0}^{|D|}\sum_{u, v \in \cU\times \cV} \mathbb{I}_{uv}(D_i)\vec{y}_i(\phi_U(u)\otimes\phi_V(v))$
    \end{center}
    \item Perform truncated SVD on the estimated Hankel matrix with rank $k$:
    $\hat{\mat{C}}_{\cU\cV} \simeq \mat{UDV}^\top$
    \item Recover the WFA $B = \langle \vec{\beta}^\top, \{\mat{B}_{\sigma}\}_{\sigma\in \Sigma}, \vec{\tau}\rangle$ realizing the function $g(h) = \tilde{R}(h)\mathbb{P}(h)$:
        $$\vec{\beta}^\top = \vec{e}^\top\mat{UD}, \hspace*{1.5cm}
        \vec{\tau} = \mat{D}^{-1}\mat{U}^\top\hat{\mat{c}}_{\cU}$$
        \vspace{-1.5\baselineskip}
        $$\mat{B}_{\sigma}  = \sum_{i =0}^{|D|}\sum_{u, v \in \cU\times \cV}\mathbb{I}_{u\sigma v}(D_i)\vec{y}_i[\mat{D}^{-1}\mat{U}^\top\phi(u)\otimes \mat{V}^\top\phi(v)]$$
        where $\vec{e}$ is a vector s.t. $\vec{e}^\top\bm{\Phi}_{\cU}^\top = (1,0,\cdots,0)^\top$
    \item Following Theorem~\ref{vtilde}, convert the WFA $B$ to $A= \langle \vec{\alpha}^\top, \{\mat{A}_{\sigma}\}_{\sigma\in \Sigma}, \vec{\omega}\rangle$.
    \item \textbf{Return} A new deterministic policy function $\pi^{new}$ defined by  
    $\pi^{new}(h) = \argmax_{a \in \mathcal{A}}\frac{\sum_{o \in \cO}\vec{\alpha}^\top\mat{A}_h\mat{A}_{ao}\vec{\omega}}{\Pi(a|h)}$
\end{enumerate}
 \caption{Scalable spectral algorithm for UQF}
 \vspace{-0.5\baselineskip}
\end{algorithm}
\subsection{Policy iteration}
Policy iteration has been widely applied in both MDP and POMDP settings~\cite{bellman1957dynamic,sutton1998introduction}, and have shown benefits from both empirical and theoretical perspectives~\cite{bellman1957dynamic}. It is very natural to apply policy iteration to our algorithm, since we directly learn a policy from data. The policy iteration algorithm is listed in Algorithm~\ref{policy_iter}. Note that for re-sampling, we convert our learned deterministic policy to a probabilistic one in an $\epsilon$-greedy fashion.
\RestyleAlgo{ruled}
\begin{algorithm}[h!]
\label{policy_iter}
\SetAlgoLined
\SetKwInOut{Input}{input}
\Input{An initial deterministic policy $\pi$, $\epsilon$-greedy factor $\epsilon$, a decay rate for $\epsilon$-greedy $\eta>1$, number of policy iterations $n$, number of trajectories $N$}
\KwResult{The final policy function $\pi^{fin}: \Sigma^* \rightarrow \cA$}
\begin{enumerate}
    \item Convert the deterministic policy $\pi$ to a probabilistic policy $\Pi$ in an $\epsilon$-greedy fashion: at each step, with probability $1-\epsilon$ select the optimal action according to $\pi$, with probability $\epsilon$ select a random action.\footnotemark
    \item Sample $N$ trajectories based on policy $\Pi$ 
    \item Execute Algorithm~\ref{algo: qwfa} or Algorithm~\ref{algo-ccfq} and obtain the corresponding new policy $\pi^{new}$.
    \item $\pi^{new} \to \pi $, $\epsilon/\eta \to \epsilon$
    \item Repeat all the above for $n$ times.

    \item \textbf{Return} The final policy function $\pi^{fin} = \pi^{new}$
\end{enumerate}
\caption{Policy iteration for UQF}
\end{algorithm}
\footnotetext{Note for the first iteration, one can set $\epsilon = 1$, resulting in a pure random sampling policy.}

\section{Experiments}
To assess the performance of our method, we conducted experiments on two domains: a toy grid world environment and the S-Pocman game~\cite{hamilton2014efficient}. We use TPSR/CPSR + fitted-Q as our baseline method. Our experiments have shown that indeed, in terms of sample complexity and time complexity, we outperm the classical two-stage algorithms. 
\subsection{Grid world experiment}
The first benchmark for our method is the simple grid world environment shown in Fig.~\ref{fig:gridworld_env}. The agent starts in the tile labeled $S$ and must reach the green goal state. At each time step, the agent can only perceive the number of surrounding walls and proceeds to execute one of the four actions: go up, down, left or right. 
To make the environment stochastic, with probability 0.2, the execution of the action will fail, resulting instead in a random action at the current time step. The reward function in this navigation task is sparse: the agent receives no reward until it reaches the termination state. We ran three variants of the aforementioned grid world, each corresponding to a different starting state. As one can imagine, the further away the goal state is from the starting state, the harder the task becomes.

We used a random policy to generate training data, which consisted of trajectories of length up to 100. To evaluate the policy learned by the different algorithms, we let the agent execute the learned policy for 1,000 episodes and computed the average accumulated discounted rewards, with discount factor being 0.99. The maximum length for test episodes was also set to 100. Hyperparameters were selected using cross-validation (i.e. the number of rows and columns in the Hankel matrices, the rank for SVD and $\gamma$). 
\begin{figure}[t]
    \vspace{-0.5\baselineskip}
    \centering
    \includegraphics[width=0.7\linewidth]{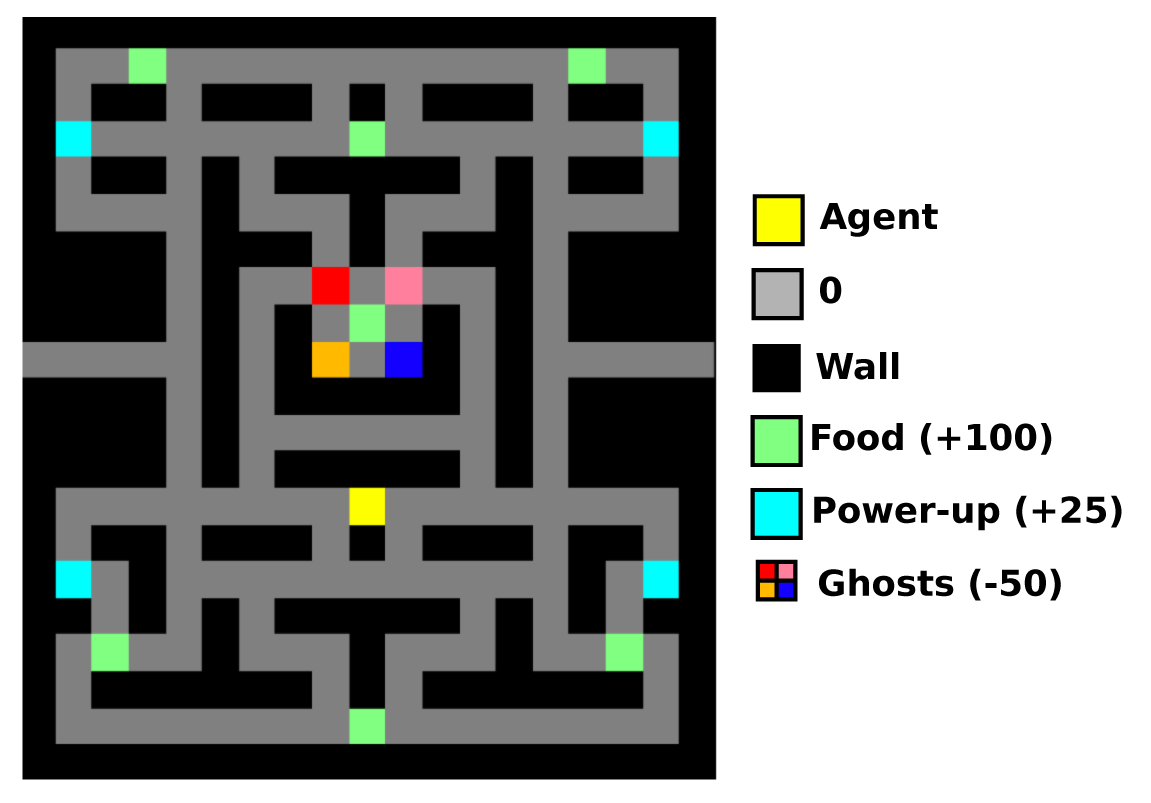}
    \caption{S-PocMan domain}
    \label{fig:my_label}
    \vspace{-1.3\baselineskip}
\end{figure}
As a baseline, we use the classical TPSRs and CPSRs as the learning method for the environment, and fitted-Q algorithm as the planning algorithm. We also performed a hyperparameter search for the baseline methods using cross validation. In addition, we report the rewards collected by a random policy as well as the optimal policy for comparison.

Results on this toy domain (see Figure \ref{fig:gridworld_env}) highlight the sample and time efficiency achieved by our method. Indeed, our algorithm outperforms the classical CPSR+fitted-Q method in all three domains, notably achieving better performance in small data regime, showing significant sample efficiency. Furthermore, it is clear that our algorithm reaches consistently to the optimal policy as sample size increases. In addition, our methods are much faster than other compared methods. For example, for the  experiment with 800 samples, to achieve similar results, our method is approximately $100$ times faster compared to CPSR+fitted-Q.

\subsection{S-PocMan domain}
For the second experiment, we show the results on the S-PocMan environment~\cite{hamilton2014efficient}. The partially observable version of the classical game Pacman was first introduced by Silver and Veness~\cite{silver2010monte} and is referred to as PocMan. In this domain, the agent needs to navigate through a grid world to collect food and avoid being captured by the ghosts. It is an extremely large partially observable domain with $10^{56}$ states~\cite{veness2011monte}. However, Hamilton et al. showed that if one were to treat the partially observable environment as if it was fully observable, a simple memoryless controller can perform extremely well under this set-up,  due to extensive reward information~\cite{hamilton2014efficient}. Hence, they proposed a harder version of PocMan, called S-PocMan. In this new domain, they drop the parts of the observation vector that allow the agent to sense the direction of the food and greatly sparsify the amount of food in the grid world, therefore making the environment more partially observable. 
\begin{table}[t]
\vspace{-1.2\baselineskip}
\caption{Training time for one policy iteration and averaged accumulated discounted rewards on S-PocMan trained on 500 trajectories.}
\vspace{0.5\baselineskip}
\label{pocman}
\centering 
\begin{tabular}{l c c c} 
\hline\hline 
 Method & \shortstack{Fitted-Q\\ Iterations} & Time (s) & Returns 
\\ [0.5ex]
\toprule 
UQF & - & \textbf{2} & \textbf{-92}  \\[1ex]
 & 400 & 489 & -101  \\[-1ex]
\raisebox{1.5ex}{CPSR} 
& 100 & 116 & -109  \\
& 50 & 60 & -150\\
& 10 & 15 & -200
\\[1ex]
\hline 
\end{tabular}
\label{tab:runtime_pocman}
\vspace{-1.2\baselineskip}
\end{table}%
In this experiment, we only used the
combination CPSR+fitted Q for our baseline algorithm, as TPSR can not scale to the large size of this environment. 
Similarly to the grid world experiment, we select the best hyperparameters through cross validation. The discount factor for computing returns was set to be 0.99999 in all runs. Table~\ref{pocman} shows the run-time and average return for both our algorithm and the baseline method. 
One can see that UQF achieves better performance compared to  CPSR+fitted-Q. Moreover, UQF exhibits significant reduction in running time: about 200 times faster than CPSR+fitted-Q. Note that building CPSR takes similar amount of time to our method, however, the extra iterative fitted-Q planning algorithm takes considerably more time to converge, as our analysis showed in section 3.3. 


\section{Conclusion}
In this paper, we propose a novel learning and planning algorithm for partially observable environments. The main idea of our algorithm relies on the estimation of the \longmethod{} with the spectral learning algorithm. Theoretically, we show that in POMDP, \method{} can be computed via a WFA and consequently can be provably learned from data using the spectral learning algorithm for WFAs. Moreover, \method{} combines the learning and planning phases of reinforcement learning together, and learns the corresponding policy in one step. Therefore, our method is more sample efficient and time efficient compared to traditional POMDP planning algorithms. This is further shown in the experiments on the grid world and S-PocMan environments.

Future work include exploring some theoretic properties of this planning approach. For example, a first step would be to obtain  convergence guarantees for policy iteration based on the \method{} spectral learning algorithm. 
In addition, our approach could be extended to the multitask setting by  leveraging the multi-task learning framework for WFAs proposed in~\cite{rabusseau2017multitask}.  
Readily, since we combine the environment dynamics and reward information together, our approach should be able to deal with partially shared environment and reward structure, leading to a potentially flexible multi-task RL framework.
\subsection*{Acknowledgement}
This research is supported by the Canadian Institute for Advanced Research (CIFAR AI chair program) and Fonds de Recherche du Québec – Nature et technologies (no. 271273). We also thank Compute Canada and Calcul Québec for the computing resources.

{
\bibliographystyle{plain}
\bibliography{refs.bib}
}
\end{document}